\tikzset{every edge/.append style={shorten >= 1pt}}
\tikzset{auto, >= stealth}
\tikzset{box/.style={draw, diamond, thick, text centered, minimum height=0.5cm, minimum width=1cm, text width=0.9cm}}
\tikzset{line/.style={draw, thick, -latex'}}
\newcommand{\framework}{\ensuremath{\mathsf{pacX}}}
\title{Probably Approximately Correct Explanations of Machine Learning Models via Syntax-Guided Synthesis}
\author{
	Daniel Neider\inst{1} \and
	Bishwamittra Ghosh\inst{2}
}
\institute{
	Max Planck Institute for Software Systems, Kaiserslautern, Germany (\email{neider@mpi-sws.org}) \and
	National University of Singapore, Singapore (\email{bghosh@u.nus.edu})
}
\begin{document}

\maketitle

\begin{abstract}
We propose a novel approach to understanding the decision making of complex machine learning models (e.g., deep neural networks) using a combination of probably approximately correct learning (PAC) and a logic inference methodology called syntax-guided synthesis (SyGuS).
We prove that our framework produces explanations that with a high probability make only few errors and show empirically that it is effective in generating small, human-interpretable explanations.

\keywords{Explainable Machine Learning \and Probably Approximately Correct (PAC) \and Syntax-Guided Synthesis (SyGuS).}
\end{abstract}


\section{Introduction}
\label{sec:intro}

Recent advances in artificial intelligence and machine learning, especially in deep neural networks, have shown the great potential of algorithmic decision making in a host of applications.
The inherent black-box nature of today’s complex machine learning models, however, has raised concerns regarding their safety, reliability, and fairness.
In fact, the lack of explanations as to why a learning-based system has made a certain decision has not only been identified as a major problem by the scientific community but also by society at large.
One such example is the European Union, who considers imposing a ``right to explanation'' to future algorithmic decision making~\cite{goodman2017european}.

As a step towards the explanation of algorithmic decision making, this paper proposes a novel framework to generate human-interpretable descriptions (explanations) of the decision boundary of black-box machine learning models.
The defining features of our framework are fourfold:
\begin{enumerate}
	\item In contrast to current efforts in the literature~\cite{DBLP:conf/aaai/IgnatievNM19,DBLP:conf/ijcai/Ignatiev20}, we view machine learning models as black boxes.
	This choice is motivated by two observations.
	Firstly, despite the immense advances in verifying deep neural networks~\cite{DBLP:conf/atva/Ehlers17,DBLP:conf/cav/KatzBDJK17,DBLP:conf/cav/ElboherGK20}, reasoning about realistic machine learning models in a symbolic, white-box manner is currently not computationally tractable---deep neural networks, as deployed in real-world applications, are simply too large.
	Secondly, one might not have direct access to the machine learning model but can only observe its input-output behavior.
	%
	\item We use quantifier-free formulas in first-order logic (FO) as explanations.
	Due to the declarative nature of FO, we believe that such explanations are generally easy for humans to understand and analyze---at least up to a certain ``size'' of the formula.
	In fact, FO has already been used successfully to explain deep neural networks~\cite{DBLP:conf/aaai/IgnatievNM19,DBLP:conf/ijcai/Ignatiev20}.
	\item We are not interested in a global explanation because a description of the decision boundary in the whole input-space (or a large part of it) will be too complex.
	Instead, we want to generate an explanation within a user-defined region of the input-space, which we call \emph{query}.
	A query can either be an area of interest in the input space or a ``ball'' around some user-given input.
	Note, however, that this is not a restriction: a query satisfying all inputs amounts to a global explanation as it covers to the whole input-space.
	\item Since an exact descriptions of the decision boundary (even inside a query) can be very complex and too difficult to understand, we compute approximate explanations that, with a high confidence, make only few errors.
	Inspired by Valiant's probably approximately correct (PAC) learning framework~\cite{DBLP:journals/cacm/Valiant84}, we term this the \emph{PAC property} and call our explanations \emph{PAC explanations}.
	More precisely, given two parameters $\varepsilon, \delta \in (0, 1)$, our framework guarantees (on termination) to generate an explanation where, with confidence at least $1 - \delta$, the probability of making an error in explaining the decision boundary is at most $\varepsilon$.
\end{enumerate}

As shown in Figure~\ref{fig:learning-epsilon-explanations}, our framework follows the principle of \emph{iterative passive learning}~\cite{DBLP:journals/tc/BiermannF72} (often referred to as \emph{counterexample-guided inductive synthesis}~\cite{CEGIS}) and consists of a feedback loop with two components: a synthesizer and a verifier.

The task of the synthesizer is to generate a hypothesis explanation (in form of a quantifier-free FO formula) from concrete input-output behavior of the machine learning model.
To this end, we propose the use of a general framework called \emph{Syntax-Guided Synthesis (SyGuS)}~\cite{DBLP:conf/fmcad/AlurBJMRSSSTU13}, which has recently gained significant attention in the context of program synthesis.
This framework offers an effective and simple way to synthesize quantifier-free FO formulas from both semantic and syntactic constraints.
In our setting, the semantic constraints correspond to input-output examples of the machine learning model, which the synthesizer obtains from the verifier (as described shortly).
The syntactic constraints allow restricting the space of possible explanations (i.e., FO formulas).
This is useful to introduce domain-knowledge (e.g., if one is only interested in particular explanations or knows the general ``pattern'' of explanations) and to reduce the computational effort required for synthesizing an explanation.

The task of the verifier, on the other hand, is to check whether the synthesizer's hypothesis is a sufficiently accurate explanation using a statistical test inspired by Valiant's PAC learning~\cite{DBLP:journals/cacm/Valiant84}.
If the hypothesis does not pass this test, the verifier returns input-output samples from the test suite that witness an error of the explanation (i.e., that fail the test).
Such  samples, which are called \emph{counterexamples}, refute the current hypothesis and guide the synthesizer towards a more accurate explanation.
The process of conjecturing explanations and checking them repeats until a sufficiently accurate explanation is found (i.e., one that passes the statistical test and, hence, satisfies the PAC property).
Once this happens, the feedback loop terminates and returns this hypothesis.

We prove that our framework guarantees to output a PAC explanation if it terminates.
However, the decision boundary of the model might be too complex, even in a PAC sense, causing our framework to loop forever.
To alleviate this problem, we identify sufficient conditions for which our framework is guaranteed to terminate.
If these conditions are met, our framework either returns a PAC explanation or reports that no FO explanation exists.

In our empirical evaluations, we show the correctness of the generated explanations on a decision tree classifier as a black-box. Furthermore, we  extend experiments to practical machine learning datasets where {\framework} generates small-size interpretable explanations of the neural networks.


\section{Preliminaries}
\label{sec:preliminaries}

In this paper, we adopt a black-box view on machine learning models.
More precisely, we view a machine learning model, \emph{model} for short, as a function $\mathcal M \colon \mathbb R^n \to C$ where $n \in \mathbb N \setminus \{ 0 \}$ is the input dimension of the model and $C = \{ c_1, \ldots, c_m \}$ is a finite set of classes.
As an illustrative example, we encourage the reader to think of multi-class deep neural networks.

A \emph{query} is a subset $Q \subseteq \mathbb R^n$.
Usually, we use polytopes (i.e., intersections of half-spaces of the $\mathbb R^n$) to represent queries because polytopes have nice algorithmic properties.
However, other symbolic representations of subsets of the $\mathbb R^n$ are also possible.
We encourage the reader to think of a query as a box.

We denote first order formulas by small Greek symbols $\varphi$, $\psi$, and so on.
For instance, the FO formula $\varphi \coloneqq (-1 \leq x_1) \land (x_1 \leq 1) \land (-1 \leq x_2) \land (x_2 \leq 1)$ defines a unit box around the origin in the $\mathbb R^2$.
Given an input $\vec{x} \in \mathbb R^n$, we define satisfaction of FO formulas using a relation $\models$ in the usual way; in particular, we write $\vec{x} \models \varphi$ if $\vec{x}$ \emph{satisfies} $\varphi$.

Given a model $\mathcal M \colon \mathbb R^n \to C$, a class $c \in C$, and a query $\psi$, a \emph{(perfect) explanation} is a quantifier-free FO formula $\varphi$ such that
\begin{align}
	\label{eq:explanation}
	\text{if } \vec{x} \models \psi \text{, then } \vec{x} \models \varphi \text{ if and only if } \mathcal M(\vec{x}) = c.
\end{align}
In other words, the explanation should exactly describe the decision boundary for label $c$ inside the query $\psi$.

By definition, $\varphi$ is not an explanation if there exists an $\vec{x} \in \mathbb R^n$ that violates Eq.~\eqref{eq:explanation}.
Such an input then either satisfies
\begin{enumerate*}[label={(\alph*)}]
	\item $\vec{x} \models \psi$, $\vec{x} \models \varphi$, and $\mathcal M(\vec{x}) \neq c$ or
	\item $\vec{x} \models \psi$, $\vec{x} \not\models \varphi$, and $\mathcal M(\vec{x}) = c$.
\end{enumerate*}
(i.e., the input lies inside the query and shows a difference between $\varphi$ and $\mathcal M$).
For a given FO formula $\varphi$, a query $\psi$, and a class $c \in C$, let
\[ V_\mathcal M(\varphi, \psi, c) \coloneqq \bigl\{ \vec{x} \in \mathbb R^n \mid \text{$\vec{x}$ violates Eq.~\eqref{eq:explanation}} \bigr\} \]
denote the set of all inputs $\vec{x}$ that do not satisfy Eq.~\eqref{eq:explanation}.

In this paper, we are not interested in exact explanations (as they will be to complex) but in approximate ones.
To make this idea precise, let us fix a probability distribution $\mathcal D$ over the input-space $\mathbb R^n$.
Moreover, let $\varepsilon \in (0, 1)$.
Then, we call a quantifier-free FO formula $\varphi$ an \emph{$\varepsilon$-explanation} if 
\[ \mathbb P_\mathcal D \bigl[ \vec{x} \in V_\mathcal M(\varphi, \psi, c) \bigr] < \varepsilon \]
(i.e., the probability of a randomly chosen input $\vec{x}$ revealing that $\varphi$ is not an explanation is less than $\varepsilon$).
In other words, the probability of $\varphi$ making an error in describing the decision boundary is less than $\varepsilon$.
This allows us to generate ``simpler'' explanations (as compared to perfect ones), which make small errors but are human-interpretable.

In the remainder, we solve the following problem.

\begin{problem} \label{prob:computing-epsilon-explanations}
Let $\mathcal M \colon \mathbb R^n \to C$ be a model with $C = \{ c_1, \ldots, c_m \}$, $c \in C$ a class, and $\psi$ a query.
Moreover, let $\varepsilon, \delta \in (0, 1)$.
Design an algorithm that generates an $\varepsilon$-explanation of $\mathcal M$ inside $\psi$ with probability at least $1 -\delta$.
\end{problem}

Since the algorithm we design to solve Problem~\ref{prob:computing-epsilon-explanations} is independent of the probability distribution $\mathcal D$, we drop the subscript whenever it is clear from the context.


\section{Generating PAC Explanations}
\label{sec:PAC-explanations}

Our PAC-learning framework for synthesizing $\varepsilon$-explanations, named \framework, is shown in Figure~\ref{fig:learning-epsilon-explanations}.
It consists of a feedback loop with two entities: a \emph{synthesizer} and a \emph{verifier}.
In every iteration of the feedback loop, the synthesizer conjectures a quantifier-free FO formula $\varphi$ based on the information it has gathered so far.
The verifier, on the other hand, checks whether the proposed formula $\varphi$ is an explanation by means of randomly sampling the model.
If the verifier detects an input $\vec{x} \in V_\mathcal M(\varphi, \psi, c)$ (i.e., an input that violates Eq.~\eqref{eq:explanation} and, hence, witnesses that the conjectured explanation $\varphi$ is incorrect on the input $\vec{x}$), it returns a so-called \emph{counterexample} $(\vec{x}, \ell)$ where $\ell \in \{ 0, 1 \}$ is the expected behavior of any future conjecture on the input $\vec{x}$ (we explain shortly how this can be derived).
After receiving a counterexample, the synthesizer refines its conjecture and proceeds to the next round of the feedback loop.
This process continues until the verifier cannot find any counterexample to the current conjecture.
Then, the feedback loop terminates and returns the most recent conjecture.
As we show in Section~\ref{sec:theoretical-analysis}, this conjecture is in fact an $\varepsilon$-explanation with probability at least $1 - \delta$.

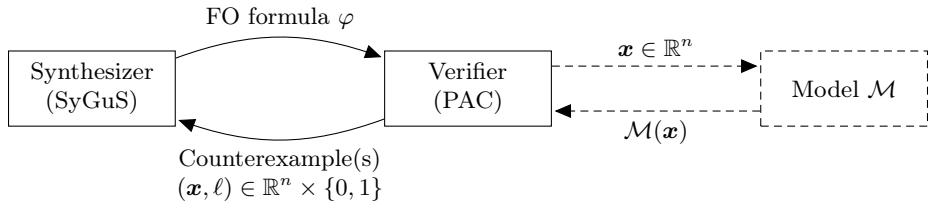
\begin{figure*}[t!h]
	\centering
	\begin{tikzpicture}
		\begin{scope}[every node/.append style={draw, text width=20mm, minimum height=10mm, align=center}]
			\node (S) at (0, 0) {Synthesizer \\ (SyGuS)};
			\node (V) at (5, 0) {Verifier \\ (PAC)};
			\node[densely dashed] (DNN) at (10, 0) {Model $\mathcal M$};
		\end{scope}

		\path[->] (S) edge[bend left=20] node {FO formula $\varphi$} (V);
		\path[->] (V) edge[bend left=20] node[align=center] {Counterexample(s) \\ $(\vec{x}, \ell) \in \mathbb R^n \times \{ 0, 1 \}$} (S);
		\path[->, densely dashed] (V.15) edge node {$\vec{x} \in \mathbb R^n$} (DNN.165);
		\path[->, densely dashed] (DNN.195) edge node {$\mathcal M(\vec{x})$} (V.-15);
	\end{tikzpicture}
	\caption{\framework: a PAC-learning framework for synthesizing $\varepsilon$-explanations} \label{fig:learning-epsilon-explanations}
\end{figure*}

In the remainder of this section, we describe  the verifier and the synthesizer in detail.

\subsection{Verifier}
\label{sec:verifier}
Given an FO formula $\varphi$, the task of the verifier is to check whether it is an $\varepsilon$-explanation.
To this end, the verifier generates a finite test suite $T_i \subset \mathbb R^n$ of inputs to the model that are randomly drawn according to the given probability distribution $\mathcal D$.
The size of $T_i$ depends on $\varepsilon$, $\delta$, and the current iteration $i \geq 1$ of the feedback loop and is given by
\[ |T_i| = \left\lceil \frac{1}{\varepsilon} (i \cdot \ln{2} - \ln{\delta}) \right\rceil. \]

For each test input $\vec{x} \in T_i$, the verifier checks whether this input satisfies Eq.~\eqref{eq:explanation} (i.e., $\vec{x} \notin V_\mathcal M(\varphi, \psi, c)$?).
This involves three checks:
\begin{enumerate}
	\item \label{itm:check-1} checking whether $\vec{x} \models \varphi$ holds;
	\item \label{itm:check-2} checking whether $\vec{x} \models \psi$ holds; and
	\item \label{itm:check-3} checking whether $\mathcal M(\vec{x}) = c$ holds.
\end{enumerate}
Check~\ref{itm:check-3} can be done in a straightforward manner by passing the input $\vec{x}$ through the model and checking whether the output is class $c$.
Checks~\ref{itm:check-1} and \ref{itm:check-2} are equally straightforward and involve the evaluation of the FO formulas $\varphi$ and $\psi$ on the input $\vec{x}$.
Since both $\varphi$ and $\psi$ are quantifier-free, this can be done by a simple procedure which recursively evaluates the formulas along their syntactic structure.
Note that all three checks can be efficiently executed in parallel, and multiple test inputs can be checked at the same time.

If none of the inputs in the test suite $T_i$ violate Eq.~\eqref{eq:explanation}, the verifier stops the feedback loop and returns the current formula $\varphi$---we show shortly that $\varphi$ is then an $\varepsilon$-explanation with probability at least $1 - \delta$.
However, if there exists an input $\vec{x}$ violating Eq.~\eqref{eq:explanation}, then the verifier returns a pair $(\vec{x}, \ell)$ as a counterexample.
Since $\vec{x}$ witnesses a violation of Eq.~\eqref{eq:explanation}, the value of $\ell$ can simply be derived from whether or not $\vec{x} \models \varphi$ (note that $\vec{x} \models \psi$ necessarily needs to hold in order to violate Eq.~\eqref{eq:explanation}):
if $\vec{x} \models \varphi$, then this means that any future conjecture $\varphi'$ has to satisfy $\vec{x} \not\models \varphi'$ (otherwise $\vec{x}$ would again witness a violation of Eq.~\eqref{eq:explanation}) and, hence, $\ell = 0$;
conversely, if $\vec{x} \not\models \varphi$, then $\ell = 1$, indicating that every future conjecture needs to satisfy $\vec{x} \models \varphi'$.

\subsection{Synthesizer}
\label{sec:synthesizer}
The synthesizer's task is generate candidate FO formulas based on the concrete data (i.e., the counterexamples) it has received so far from the verifier.
To this end, we assume that the synthesizer maintains a finite set $\mathcal S \subset \mathbb R^n \times \{ 0, 1 \}$, called \emph{sample}, in which it stores the counterexamples.

In every iteration of the feedback loop, the synthesizer is asked to construct an FO formula $\varphi$ that is \emph{consistent with $\mathcal S$} in the sense that $\vec{x} \models \varphi$ for each $(\vec{x}, 1) \in \mathcal S$ and $\vec{x} \not\models \varphi$ for each $(\vec{x}, 0) \in \mathcal S$.
Note that we here look for a perfect classifier (which does not make any mistake), unlike what is usually the case in machine learning.

To solve this logic synthesis\slash learning problem, we resort to a framework called \emph{Syntax-Guided Synthesis (SyGuS)}~\cite{DBLP:conf/fmcad/AlurBJMRSSSTU13}.
Intuitively, SyGuS provides a standardized way to synthesize quantifier-free FO formulas from logical specifications, which are themselves expressed in first-order logic.
In this work, we do not require the full power of the SyGuS framework but only its ability to synthesize FO formulas from semantic constraints that are provided as input-output examples.
We use such input-output constraints to ensure that the resulting FO formula is consistent with the current sample.

A defining feature of the SyGuS framework is its ability to synthesize formulas that adhere to user-provided syntactic constraints (in addition to semantic constraints).
These syntactic constraints are given as a context free grammar, which defines the permissible solutions to the synthesis problem.
This has two advantages:
first, it allows the user to infuse domain knowledge into the synthesis process (e.g., the desired formula needs to be in negation normal form, it does or does not use certain Boolean operators, or specific constants do or do not occur);
second; it restricts the search space of possible solution and, hence, improves the performance of the synthesis process.
Note that this feature is optional: if no grammar is provided, the solution can be a quantifier-free FO formula in its most general form.

SyGuS is an active field of research, supporting numerous background theories (such as linear integer and real arithmetic, the theory of strings, the theory of bitvectors, etc.), and various mature synthesis engines exist, all of which understand the SyGuS interchange format.
Most of these engines are powered by symbolic enumeration techniques and\slash or powerful SMT solvers, such as Z3~\cite{DBLP:conf/tacas/MouraB08} or CVC4~\cite{DBLP:conf/cav/BarrettCDHJKRT11}.
Recently, this area has made significant progress, and modern SyGuS engines are often able to synthesize FO formulas\slash function from moderate-size real-world specifications.

Implementing the synthesizer is now straightforward.
We translate the sample into SyGuS-IF, provide a context free grammar (if desired by the user), and then invoke a SyGuS engine.
Once an FO formula has been synthesized, the verifier hands it over to the verifier as a new conjecture.
Note that it is easy to experiment with different SyGuS engines as they all support the SyGuS interchange format.


\subsection{Theoretical Analysis of the Framework}
\label{sec:theoretical-analysis}

We claim that if our framework returns a formula $\varphi^\star$, say after $m \geq 1$ iterations of the feedback loop, then $\varphi^\ast$ is an $\varepsilon$-explanation with probability at least $1 - \delta$.
To prove that this is in fact true, we observe that the probability of $\varphi^\star$ not being an $\varepsilon$-explanation (i.e., $\mathbb P_\mathcal D \bigl[ \vec{x} \in V_\mathcal M(\varphi, \psi, c) \bigr] \geq \varepsilon$) even if all test inputs have passed all of the checks of the verifier is at most
\begin{align*}
	\sum_{i = 1}^m (1 - \varepsilon)^{|T_i|} & \leq \sum_{i = 1}^m e^{-\varepsilon |T_i|} \\
	& = \sum_{i = 1}^m e^{-\varepsilon \left\lceil \frac{1}{\varepsilon} (i \cdot \ln{2} - \ln{\delta}) \right\rceil} \\
	& \leq \sum_{i = 1}^m e^{-\varepsilon \cdot \frac{1}{\varepsilon} (i \cdot \ln{2} - \ln{\delta})} = \sum_{i = 1}^m e^{-i \cdot \ln{2} + \ln{\delta}} \\
	& = \sum_{i = 1}^m e^{-i \cdot \ln{2}} \cdot e^{\ln{\delta}} = \sum_{i = 1}^m 2^{-i} \cdot \delta \\
	& \leq \delta.
\end{align*}
Thus, $\varphi^\star$ is indeed an $\varepsilon$-explanation with probability at least $1 - \delta$, which proves our main result.

\begin{theorem} \label{thm:main}
Let $\mathcal M \colon \mathbb R^n \to C$ be a model, $c \in C$ a class label, $\psi$ a query, $\mathcal D$ a probability distribution over $\mathbb R^n$, $\varepsilon \in (0, 1)$ an approximation parameter, and $\delta \in (0,1)$ confidence parameter.
If our framework terminates, it outputs a quantifier-fee FO formula $\varphi$ that is an $\varepsilon$-explanation for $\mathcal N$, $c$, and $\psi$ with probability at least $1 - \delta$.
\end{theorem}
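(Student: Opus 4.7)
The plan is to bound, on the event that the loop terminates after some iteration $m$, the probability that the returned formula $\varphi^\star$ fails to be an $\varepsilon$-explanation despite having passed every statistical check. First I would fix $m \geq 1$ and define the bad event as the conjunction of two things: (i) the true error $\mathbb{P}_\mathcal{D}[\vec{x} \in V_\mathcal{M}(\varphi^\star,\psi,c)]$ is at least $\varepsilon$, and (ii) every one of the $|T_i|$ i.i.d.\ test inputs drawn during each iteration $i \leq m$ happened to land outside $V_\mathcal{M}$ of the hypothesis that was current at iteration $i$. For a fixed hypothesis with error at least $\varepsilon$, a single fresh sample misses $V_\mathcal{M}$ with probability at most $1 - \varepsilon$, and by independence the entire test suite $T_i$ misses with probability at most $(1-\varepsilon)^{|T_i|} \leq e^{-\varepsilon |T_i|}$, using the standard inequality $1 - x \leq e^{-x}$.

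The central step is a union bound over the iteration at which termination occurs, since a priori we do not know which round will be the last one and the hypothesis inspected in each round may differ. This yields an upper bound of $\sum_{i=1}^{m} e^{-\varepsilon |T_i|}$ on the bad-event probability. Substituting the schedule $|T_i| = \lceil \frac{1}{\varepsilon}(i \ln 2 - \ln \delta) \rceil$ (the ceiling only helps the bound), the $i$-th summand becomes at most $e^{-i \ln 2 + \ln \delta} = 2^{-i}\,\delta$. Summing and using $\sum_{i=1}^{\infty} 2^{-i} = 1$ then gives a total bound of $\delta$, so the complementary event, namely that $\varphi^\star$ is an $\varepsilon$-explanation, has probability at least $1 - \delta$.

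The main obstacle is not the calculation itself but justifying the union bound, and in particular realising that the sample-size schedule must be chosen before the argument so that the per-iteration failure probabilities form a convergent geometric series whose sum fits inside the overall confidence budget $\delta$. This is precisely the role of the $i \ln 2$ term in $|T_i|$: it provides geometric shrinkage across iterations so that an unbounded number of potential rounds can be absorbed, while the $-\ln \delta$ term calibrates the total mass. Once this design choice is in place, the remainder of the argument is a routine chain of inequalities, and the theorem follows by taking the contrapositive of the bad-event bound.
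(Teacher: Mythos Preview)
Your proposal is correct and matches the paper's own argument essentially step for step: the same bad-event formulation, the same per-iteration bound $(1-\varepsilon)^{|T_i|} \leq e^{-\varepsilon|T_i|}$, the same union bound over iterations, and the same geometric-series computation using the schedule $|T_i| = \lceil \frac{1}{\varepsilon}(i\ln 2 - \ln\delta)\rceil$ to obtain $\sum_{i=1}^m 2^{-i}\delta \leq \delta$. Your additional commentary on why the $i\ln 2$ term is needed to make the series summable is accurate and only adds exposition, not a different method.
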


In general, we cannot guarantee that our framework terminates.
On the one hand, the chosen syntactic fragment of FO (e.g., linear real arithmetic) might not expressive enough to capture the decision boundary of a model and, hence, the feedback loop continues forever, approximating the boundary better and better.
On the other hand, even if an $\varepsilon$-explanation in the the chosen fragment exists, the synthesizer might not find one (as the search space might be infinite), in which case the loop also continues forever.
However, following Löding, Madhusudan, and Neider~\cite{DBLP:conf/tacas/LodingMN16}, we identify two practical settings for which we can guarantee the termination of our framework:
\begin{enumerate}
	\item The syntactic constraints permit only a finite number of FO formulas (which can easily be enforced using an appropriate grammar). In the vocabulary of computational learning theory, this mean that the so-called \emph{concept class} of potential solutions is finite.
	\item There exists a total order $\prec$ on the considered fragment of FO (e.g., the lexicographic order over the string representations of formulas) and the SyGuS engine is able to construct \emph{$\prec$-minimal} formulas that satisfy the semantic and syntactic constraints.
\end{enumerate}
	
Let us begin with the first setting, where we assume that the syntactic constraints permit only a finite number of FO formulas.
A practical relevant example is the class of conjunctions over a large but finite fixed set of predicates (i.e., Boolean features).
For this specific setup, \framework\ either terminates and returns an $\varepsilon$-explanation, or it reports that none exists.
To simplify the following exposition, we assume that the SyGuS engine signals if none of the finitely many FO formula satisfies the syntactic and semantic constraints.\footnote{A trivial procedure to check this would be to enumerate all valid formulas in the syntactic fragment and check the semantic constraints (i.e., the input-output examples).}

\begin{theorem} \label{thm:finite-concept-class}
If the syntactic constraints of SyGuS permit only a finite number of formulas, then \framework\ is guaranteed to terminate.
On termination, it either returns an $\varepsilon$-explanation or reports that no explanation respecting the syntactic constraints exists.
\end{theorem}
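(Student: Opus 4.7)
The plan is to argue termination via a monovariant on the set of formulas in the (finite) concept class that are still \emph{consistent} with the current sample $\mathcal{S}$. Let $N$ be the finite number of FO formulas permitted by the syntactic constraints, and for iteration $i \geq 1$ let $\mathcal{C}_i$ denote the subset of these $N$ formulas that are consistent with $\mathcal{S}$ at the start of iteration $i$. Since $\mathcal{S} = \emptyset$ initially, $|\mathcal{C}_1| = N$.

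The key claim to establish is that $\mathcal{C}_{i+1} \subsetneq \mathcal{C}_i$ whenever iteration $i$ does not terminate the loop. Suppose the verifier returns a counterexample $(\vec{x}, \ell)$ for the current conjecture $\varphi$. I would unfold the two cases from Section~\ref{sec:verifier}: if $\ell = 0$ then $\vec{x} \models \varphi$ but henceforth formulas must satisfy $\vec{x} \not\models \varphi'$, and if $\ell = 1$ then $\vec{x} \not\models \varphi$ but henceforth formulas must satisfy $\vec{x} \models \varphi'$. In either case $\varphi$ is inconsistent with the enlarged sample and hence $\varphi \in \mathcal{C}_i \setminus \mathcal{C}_{i+1}$, giving strict containment. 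Since $|\mathcal{C}_1| = N$ is finite, this bounds the number of iterations by $N$.

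For the two termination modes, I would argue as follows. If the loop exits because the verifier's test suite passes, Theorem~\ref{thm:main} directly delivers the $\varepsilon$-explanation conclusion with confidence $1-\delta$. Otherwise, termination must occur because the synthesizer signals (as assumed in the footnote) that $\mathcal{C}_i = \emptyset$. In that case I would argue contrapositively: any perfect explanation $\varphi^\star$---that is, any FO formula in the concept class satisfying Eq.~\eqref{eq:explanation} on \emph{every} input---can never be witnessed by a counterexample, so by induction it lies in $\mathcal{C}_j$ for all $j$. Since $\mathcal{C}_i$ is empty, no such $\varphi^\star$ exists in the concept class, which is exactly the reported outcome.

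The main subtlety is the strict-shrinkage step, which is conceptually simple but relies on a careful unpacking of how $\ell$ is chosen in Section~\ref{sec:verifier}; getting that case analysis right is what makes the rest of the argument a routine counting bound. A secondary point worth flagging in the write-up is an honest reading of the negative conclusion: the proof only rules out a \emph{perfect} explanation within the syntactic fragment, so an $\varepsilon$-explanation that disagrees with some counterexamples on a small-probability set could still exist; this is an artefact of requiring exact consistency of the synthesizer's output with $\mathcal{S}$, and should be acknowledged rather than hidden.
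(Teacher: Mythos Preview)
Your proposal is correct and follows essentially the same approach as the paper: both arguments hinge on the fact that each returned counterexample permanently rules out the current conjecture, so with only finitely many admissible formulas the loop must halt. The paper phrases this as ``all conjectures are semantically distinct'' whereas you phrase it as the strictly shrinking monovariant $\mathcal{C}_i$, but these are dual views of the same elimination step; your closing remark that the negative report only excludes a \emph{perfect} explanation (not an $\varepsilon$-explanation) is a fair caveat that the paper leaves implicit.
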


For the remainder, let $\mathcal S_i$ be the sample in the $i$-th iteration of the feedback loop, where $i \in \{ 0, 1, \ldots \}$.
Moreover, let $\varphi_i$ be the explanation generated by \framework\ in the $i$-th iteration.

\begin{proof}[of Theorem~\ref{thm:finite-concept-class}]
We first make the following observation:
\begin{enumerate}
	\item \label{obs:explanations-semantically-distinct}
	The explanations generated during the run of \framework\ are semantically distinct (i.e., there exists $\vec{x} \in \mathbb R^n$ such that for all $i \neq j$, we have $\vec{x} \models \varphi_i$ if and only if $\vec{x} \not\models \varphi_j$).
	We first show this for two consecutive explanations $\varphi_i$ and $\varphi_{i+1}$: since \framework\ always constructs explanations that are consistent with the current sample and the counterexample $(\vec{x}_i, \ell_i)$ was added to $\mathcal S_i$ to form $\mathcal S_{i+1}$, we have $\vec{x}_i \not\models \varphi_i$ and $\vec{x}_i \models \varphi_{i+1}$.
	An analogous argument then shows that this is also true for the explanations $\varphi_i$ and $\varphi_j$ for each $j \in \{ 0, \ldots, i-1 \}$, using the counterexample $(\vec{x}_j, \ell_j)$ of Iteration~$j$ as a witness.
\end{enumerate}

If the syntactic restrictions of SyGuS only permit a finite number of distinct formulas, then \framework\ either eventually conjectures one that passes the verifier's test or it will have exhausted all possible FO formulas (in that syntactic fragment).
In the first case, \framework\ has found an $\varepsilon$-explanation and terminates.
In the second case, the verifier returns an additional counterexample.
However, \framework\ has already exhausted all semantically distinct formulas in the syntactic fragment.
Thus, the SyGuS engine aborts and reports that there is no FO formula in the syntactic fragment that satisfies the semantic constraints (i.e., that is consistent with the current sample).
Once this happens, \framework\ terminates as well and reports that there exist no explanation in the chosen syntactic fragment. \qed
\end{proof}

Let us now consider the second setting, where we assume that the SyGuS engine can always generate formulas that
\begin{enumerate*}[label={(\alph*)}]
	\item satisfy the syntactic and semantic constraints and
	\item are minimal with respect to a total order $\prec$ over the set of all syntactically valid FO formulas.
\end{enumerate*}
Löding, Madhusudan, and Neider~\cite{DBLP:conf/tacas/LodingMN16} call such synthesis engines \emph{Occam learners}, and we adopt this terminology here.
In fact, if a perfect explanation exists (which might not be unique) and \framework\ uses an Occam learner, then it is indeed guaranteed to find an $\varepsilon$-explanation in finite time (though not necessarily a perfect one).

\begin{theorem} \label{thm:occam-learner}
If a perfect explanation exists and \framework\ uses an Occam learner, then it is indeed guaranteed to terminate and return an $\varepsilon$-explanation.
\end{theorem}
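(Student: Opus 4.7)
The plan is to argue by tracking the sequence of hypotheses $\varphi_0, \varphi_1, \ldots$ that the Occam learner produces and showing it strictly climbs in $\prec$ toward a perfect explanation $\varphi^\star$, which must terminate the loop. The first step is to observe that $\varphi^\star$ is consistent with \emph{every} sample $\mathcal S_i$ the synthesizer ever sees. Indeed, since $\varphi^\star$ satisfies Eq.~\eqref{eq:explanation} on all of $\mathbb R^n$, every counterexample $(\vec{x}, \ell)$ returned by the verifier must satisfy $\ell = 1$ iff $\vec{x} \models \varphi^\star$, so $\varphi^\star$ is always among the formulas the Occam learner considers. Consequently $\varphi_i \preceq \varphi^\star$ for every $i$.

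Next, I would prove the key monotonicity lemma: $\varphi_0 \prec \varphi_1 \prec \varphi_2 \prec \cdots$. The argument mirrors Observation~1 from the proof of Theorem~\ref{thm:finite-concept-class}. Any formula consistent with $\mathcal S_{i+1} = \mathcal S_i \cup \{(\vec{x}_i, \ell_i)\}$ is, trivially, consistent with $\mathcal S_i$; and the counterexample $(\vec{x}_i,\ell_i)$ explicitly refutes $\varphi_i$, so $\varphi_i$ is \emph{not} consistent with $\mathcal S_{i+1}$. Hence $\varphi_{i+1}$ is the $\prec$-minimum of a strict subset of the candidates for $\varphi_i$ that does not contain $\varphi_i$ itself. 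Since $\varphi_i$ was the $\prec$-minimum over the larger set, it lower-bounds everything in the smaller set, and because $\varphi_{i+1} \neq \varphi_i$, we get $\varphi_i \prec \varphi_{i+1}$.

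Combining the two, the sequence $(\varphi_i)$ is strictly $\prec$-increasing while staying $\preceq \varphi^\star$. Under the standing assumption (made explicit by Löding, Madhusudan, and Neider~\cite{DBLP:conf/tacas/LodingMN16} and satisfied by orders such as length-lexicographic on string representations) that every element of the considered fragment has only finitely many $\prec$-predecessors, the chain below $\varphi^\star$ is finite, so the loop cannot run indefinitely. In the worst case, the sequence reaches a formula semantically equivalent to $\varphi^\star$; but then $V_\mathcal{M}(\varphi_i, \psi, c) = \emptyset$ and the verifier finds no counterexample in any test suite, so \framework\ terminates on that iteration. An appeal to Theorem~\ref{thm:main} then yields the $\varepsilon$-explanation guarantee.

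The main obstacle, and the point where I would be most careful, is the hidden order-theoretic assumption: a bare ``total order'' is not enough, since an infinite strictly increasing chain below $\varphi^\star$ is possible in general. I would either state explicitly that $\prec$ has finite initial segments (equivalently, that $\prec$ well-orders the fragment of type $\omega$), which matches the Occam-learner definition in~\cite{DBLP:conf/tacas/LodingMN16}, or weaken the claim to hold only under such $\prec$. The rest of the argument is purely combinatorial and reuses the semantic-distinctness reasoning already established for Theorem~\ref{thm:finite-concept-class}.
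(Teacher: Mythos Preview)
Your proposal is correct and follows essentially the same route as the paper: both establish the strict monotonicity $\varphi_i \prec \varphi_{i+1}$ from the fact that $\varphi_{i+1}$ is consistent with $\mathcal S_i \subsetneq \mathcal S_{i+1}$ while $\varphi_i$ was $\prec$-minimal there, bound the chain above by the perfect explanation $\varphi^\star$ (which is consistent with every sample), and conclude termination once the predecessors of $\varphi^\star$ are exhausted. You are in fact more explicit than the paper in flagging that a bare total order does not suffice and that one needs finite initial segments (as in~\cite{DBLP:conf/tacas/LodingMN16}); the paper silently assumes this when it speaks of having ``exhausted all FO formulas $\varphi$ with $\varphi \prec \varphi^\star$.''
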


\begin{proof}[of Theorem~\ref{thm:occam-learner}]
First, we make the following two observation:
\begin{enumerate}[start=2]
	\item \label{obs:samples-grow-strictly-monotonically}
	The sequence $\mathcal S_0, S_1, \ldots$ of samples generated in each iteration grows strictly monotonically (i.e., $\mathcal S_0 \subsetneq S_1 \subsetneq \cdots$).
	It is not hard to verify that the sequence grows monotonically since \framework\ always adds counterexamples to $\mathcal S$ but never removes them.
	The strictness arises from the fact that \framework\ always constructs hypotheses that are consistent with the current sample.
	Thus, the explanation $\varphi_i$ of Iteration~$i$ is consistent with $\mathcal S_i$, but the counterexample $(\vec{x}_i, \ell_i)$ was added (forming $\mathcal S_{i+1}$) because $\varphi_i$ was incorrect on this counterexample.
	Thus, $(\vec{x}_i, \ell_i)$ cannot have been an element of $\mathcal S_i$ and, hence, $\mathcal S_i \subsetneq \mathcal S_{i+1}$.
	\item \label{obs:formulas-grow-strictly-monotonically}
	We have $\varphi_i \prec \varphi_{i+1}$ for all $i$.
	Towards a contradiction, assume that $\varphi_i \succ \varphi_{i+1}$ (note that $\varphi_i = \varphi_{i+1}$ is not possible due to Observation~\ref{obs:explanations-semantically-distinct}).
	Since \framework\ always computes consistent formulas and $\mathcal S_i \subsetneq \mathcal S_{i+1}$ (see Observation~\ref{obs:samples-grow-strictly-monotonically}), we know that $\varphi_{i+1}$ is not only consistent with $\mathcal S_{i+1}$ but also with $\mathcal S_i$ (by definition of consistency).
	Moreover, the SyGuS engine is an Occam learner that always computes minimal consistent FO formulas.
	Hence, since $\varphi_{i+1}$ is consistent with $\mathcal S_i$ and $\varphi_i \succ \varphi_{i+1}$, the formula $\varphi_i$ cannot have been minimal with respect to $\prec$, which is a contradiction.
\end{enumerate}

Let $\varphi^\star$ now denote a perfect explanation.
Due to Observation~\ref{obs:formulas-grow-strictly-monotonically}, \framework\ will either terminate with an $\varepsilon$-explanation (which might or might not be perfect) or has exhausted all FO formulas $\varphi$ with $\varphi \prec \varphi^\star$.
Since all samples generated during the run of \framework\ are always consistent with $\varphi^\star$ (since the verifier always returns counterexamples from the set $V_\mathcal M(\varphi, \psi, c)$ and by virtue of the fact that all perfect explanation coincide inside the query $\psi$), the SyGuS engine necessarily generates $\varphi^\star$ in the next iteration.
Since $\varphi^\star$ is a perfect explanation, it is also an $\varepsilon$-explanation that passes the verifier's test.
Thus, \framework\ terminates and returns an $\varepsilon$-explanation.\qed
\end{proof}


\section{Experimental Evaluation}
\label{sec:evaluation}

In this section, we evaluate the explanations generated by {\framework}. We first discuss the experimental setup and the objective of the experiments and later discuss the experiment results. 

\subsection{Experimental Setup}
We have implemented a prototype of {\framework} in Python. The core technical component of the learner in {\framework} relies on solving appropriately designed SyGuS instances, and to this end, we have employed CVC4~\cite{DBLP:conf/cav/BarrettCDHJKRT11} as the solver for SyGuS. In SyGuS, we set the logic of the instance to LRA (Linear Real Arithmetic) to handle practical machine learning benchmarks consisting of real and categorical features. As discussed earlier, {\framework} allows to specify the format of the synthesized formula through the syntactic constraints of SyGuS. In our implementation, we have specified the syntactic constraints to learn formulas in DNF (Disjunctive Normal Form). For a real-valued feature, SyGuS can learn a real-valued constant with which the feature is compared such that the feature along with the compared constant is regarded as a Boolean predicate in the DNF formula. In our implementation, we have specified a finite set of constants for each real-valued feature. 

As a black-box, we have employed the neural-network library of the scikit-learn module in Python~\cite{scikit-learn}. Although {\framework} considers the neural network as a black-box, in our implementation, we first train a network on practical ML benchmarks and later explain it using DNF formulas. For the training of the neural network, the classifier is set to its default choices of the parameters as specified by the scikit-learn module.\footnote{As a future work, we explore different parameter choices and its effect on the generated explanations.} 
In our experiment, we have included three real-world benchmarks: Zoo, Iris, and Adult from the UCI repository~\cite{Dua:2019}.

The objectives of our empirical studies are as follows. 

\begin{itemize}
	\item Can {\framework} generate accurate and interpretable explanations? 
	\item What is the effect of the syntactic constraints in SyGuS on the generated explanations? 
\end{itemize}

We first discuss the experimental results on a synthetic benchmark and then extend discussions to practical benchmarks in the following. 

\subsection{Experimental Results}

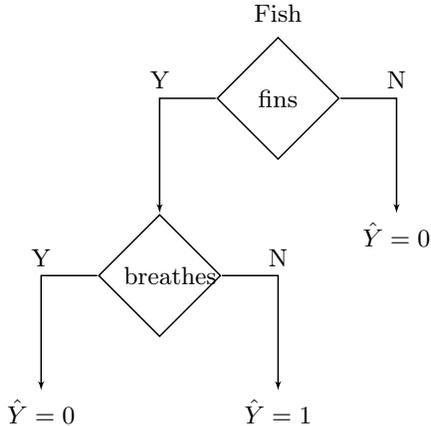
\begin{figure}
	\center
	\scalebox{.7}{	
		\begin{tikzpicture}[x=1cm,y=2cm]
		\node [box, scale=1.5]                                    (p)      {fins};
		\node [scale=1.5, above= 0.1cm of p]  (a)    {Fish};
		\node [scale=1.5, box, below= of p, xshift=-1.5cm]    (a1)    {breathes};
		\node [scale=1.5,below= of a1, xshift=-1.5cm]  (a11)    { $\hat{Y}= 0$};
		\node [scale=1.5,below= of a1, xshift=1.5cm]   (a12)    { $\hat{Y}=1 $};
		\node [scale=1.5,below= of p, xshift=1.5cm]  (a22)    { $\hat{Y}= 0$};
		\path [line] (p) -|         (a1) node [scale=1.5,midway, above]  {Y};
		\path [line] (p) -|         (a22) node [scale=1.5,midway, above]  {N};
		\path [line] (a1) -|       (a11) node [scale=1.5,midway, above]  {Y};
		\path [line] (a1) -|       (a12) node [scale=1.5,midway, above]  {N};
		\end{tikzpicture}}
		\caption{A decision tree classifier for predicting `fish' in the Zoo animal dataset. }
		\label{fig:sanity_decision_tree}
	
\end{figure}

\subsubsection{Synthetic Benchmark.}

We have tested {\framework} on `Zoo animal classification dataset' in order to verify whether {\framework} generates correct explanations. In this experiment, we have considered a decision tree classifier as a black-box (Figure~\ref{fig:sanity_decision_tree}), where the expected explanation can be understood well and it is easy to verify whether or not an explanation is correctly describing the decision boundary . The classifier can predict whether an animal is `fish' or not depending on two Boolean features: fins and breathes among a total of 16 features. In Table~\ref{table:sanity}, we list a set of queries and the generated explanations for the decision tree in Figure~\ref{fig:sanity_decision_tree}. For this experiment, we have set $ \varepsilon = 0.05 $ and $ \delta = 0.05 $ in {\framework}. 

When the query is TRUE, we ask for an explanation on all inputs. In that case, {\framework} learns an explanation  `$ \neg $ breathes $ \wedge $ fins', that is the DNF representation of the decision tree in Figure~\ref{fig:sanity_decision_tree}. Intuitively, an input that satisfies the DNF formula `$ \neg $ breathes $ \wedge $ fins' is predicted positive by the black-box and vice versa. The size of this explanation is two as there are two literals in the generated explanation.  {\framework} takes around $ 0.42 $ second in learning and verifying this explanation where the learner takes $ 0.32 $ second and the verifier takes $ 0.1 $ second. Furthermore, the verifier tests on average $ 130 $ random inputs before certifying that the generated explanation is an $ \varepsilon $-explanation of the black-box with confidence $ 1 - \delta $.  Empirically, we have observed that this explanation has accuracy $ 1.0 $ on the test dataset. 

{\framework} allows to restrict the input domain by specifying the query. When we consider a query `$ \neg $ fins', we are interested in an explanation of the black-box  on inputs where the Boolean feature `fins  =  FALSE'. For this query, {\framework} learns an explanation `FALSE' meaning that if the feature `fins' is FALSE for an input, the input is predicted negative by the black-box. This reasoning can be validated by the decision tree in Figure~\ref{fig:sanity_decision_tree}, where the branch leading from `$ \neg $ fins' reaches the leaf node $ \hat{Y} = 0 $. 

When the query is `$ \neg $ breathes', the generated explanation is `fins'. This explanation is trivially verified by the decision tree in Figure~\ref{fig:sanity_decision_tree} where an input satisfying $ \neg $ breathes is predicted positive when fins = TRUE. Moreover, when the query is `breathes', {\framework} learns an explanation `FALSE' meaning that an input with `breathes = TRUE' is predicted `FALSE' by the classifier. Although such a prediction is not explicitly visible in the decision tree in Figure~\ref{fig:sanity_decision_tree}, {\framework} certifies this explanation as an $ \varepsilon $-explanation with confidence $ 1-\delta $. Empirically, this explanation has accuracy $ 1.0 $ on the test dataset.   Finally, we introduce another Boolean feature `milk' in the query and ask for an explanation of the black-box on inputs where milk = TRUE. {\framework} learns that such an input is predicted TRUE if the formula `$ \neg $ breathes $ \wedge $ fins' is satisfied. Intuitively, an animal with milk = TRUE is predicted as `fish' by the black-box classifier if for the same input, breathes = FALSE and fins = TRUE. 

\begin{table}
	\centering
	
	\caption{Example of explanations for the decision tree in Figure~\ref{fig:sanity_decision_tree}.}
	\label{table:sanity}
	\setlength{\tabcolsep}{2em}

	\begin{tabular}{llrrr}
		\toprule
		Query &                 Explanation & Size & Accuracy & Time \\
		\midrule
		TRUE &   $ \neg $ breathes $ \wedge $ fins &    2 &      1.0 & 0.42 \\
		$ \neg $ fins &                       FALSE &    1 &      1.0 & 0.06 \\
		$ \neg $ breathes &                        fins &    1 &      1.0 & 0.14 \\
		breathes &                       FALSE &    1 &      1.0 & 0.06 \\
		milk &   $ \neg $ breathes $ \wedge $ fins &    2 &      1.0 & 0.44 \\
		\bottomrule
	\end{tabular}

\end{table}

\subsubsection{Practical Benchmarks.}
We now discuss the experimental results on practical benchmarks where we consider a neural network as a black-box and aim to explain the working of the network around a specific input. In our implementation, we have considered a distance-based query where our goal is to learn explanations of the network in the vicinity of a specific input in the dataset. For that, we normalize each feature between $ [0,1] $ and set the maximum distance of the query  from the target input to $ 0.5 $. While there can be different choices of the distance function, we have  focused on the cosine distance function in this implementation. Additionally,  we have set $ \varepsilon = 0.05 $, $ \delta = 0.05 $, and timeout as $ 300 $ seconds in {\framework}. 
We present the average results over $ 300 $ iterations in Table~\ref{table:practical}.

 \begin{table}
 	\centering
 	
 	\caption{Result in practical benchmarks.}
 	\label{table:practical}
 	\setlength{\tabcolsep}{1em}

\begin{tabular}{lrrrrrr}
	\toprule
	Dataset & Explanation size & Accuracy & Time & Learner(\%) & Verifier(\%) & Test inputs \\
	\midrule
	Adult &              6.2 &     0.69 &  300 &       0.96 &        0.04 &         173 \\
	Iris &              7.3 &     0.85 &  300 &       0.98 &        0.02 &         105 \\
	Zoo &              6.5 &     0.89 &  300 &       0.99 &        0.01 &         220 \\
	\bottomrule
\end{tabular}

 \end{table}

In Table~\ref{table:practical}, we observe that {\framework} generates small-size explanations  containing on average less than eight Boolean predicates in all three datasets. Therefore, the generated explanations are highly interpretable (see Table~\ref{table:explanations}). While the explanations are  succinct, the accuracy of the explanation is between $ 0.7 $ to $ 0.9 $.  The reason behind the poor accuracy of the explanations can be attributed by the fact that {\framework} times out in all three datasets and cannot certify $ (\epsilon, \delta) $ guarantee of the explanations. In Table~\ref{table:practical}, the learner takes majority of the allotted time than the verifier. The high execution time of the learner mostly depends on the syntactic constraints of SuGuS in order to learn interpretable explanations of a certain format, which in our case is DNF. We next discuss the effect of this syntactic constraints on the generated explanations.

\begin{table}
	\centering
	
	\caption{Examples of generated explanations in the vicinity of a specific input in the dataset. Each explanation for the black-box neural network is a first order formula in DNF that if satisfied, predicts positive class. }
	\label{table:explanations}
	\setlength{\tabcolsep}{1em}

	\begin{tabular}{ll}
		\toprule
		Dataset & \multicolumn{1}{c}{Explanation} \\
		\midrule
		Adult &             ((age $ < 0.25 $) $ \mathsf{AND} $ (education-num $ > 0.5 $)) $ \mathsf{OR} $  \\ &
		((education-num $ < 0.5 $) $ \mathsf{AND} $ ( capital-gain $ < 0.5 $) $ \mathsf{AND} $( hours-per-week $ <  0.75 $))  \\
		\\
		Iris &              ((sepal-width $ < 0.5 $) $ \mathsf{AND} $ (petal-length $ > 0.75 $))
		$ \mathsf{OR} $ \\ &
		((sepal-length $ > 0.25 $) $ \mathsf{AND} $  (sepal-width $ < 0.5 $) $ \mathsf{AND} $ (petal-length $ < 0.5 $) $ \mathsf{AND} $ (petal-width $ < 0.75 $))
		\\
		\\
		Zoo &              ($ \neg $ feathers $ \mathsf{AND} $ milk) $ \mathsf{OR} $  (backbone $ \mathsf{AND} $  $ \neg $ fins $ \mathsf{AND} $ catsize)  \\
		
		\bottomrule
	\end{tabular}

\end{table}

\begin{figure}[!t]
	\centering
	\includegraphics[scale=.4]{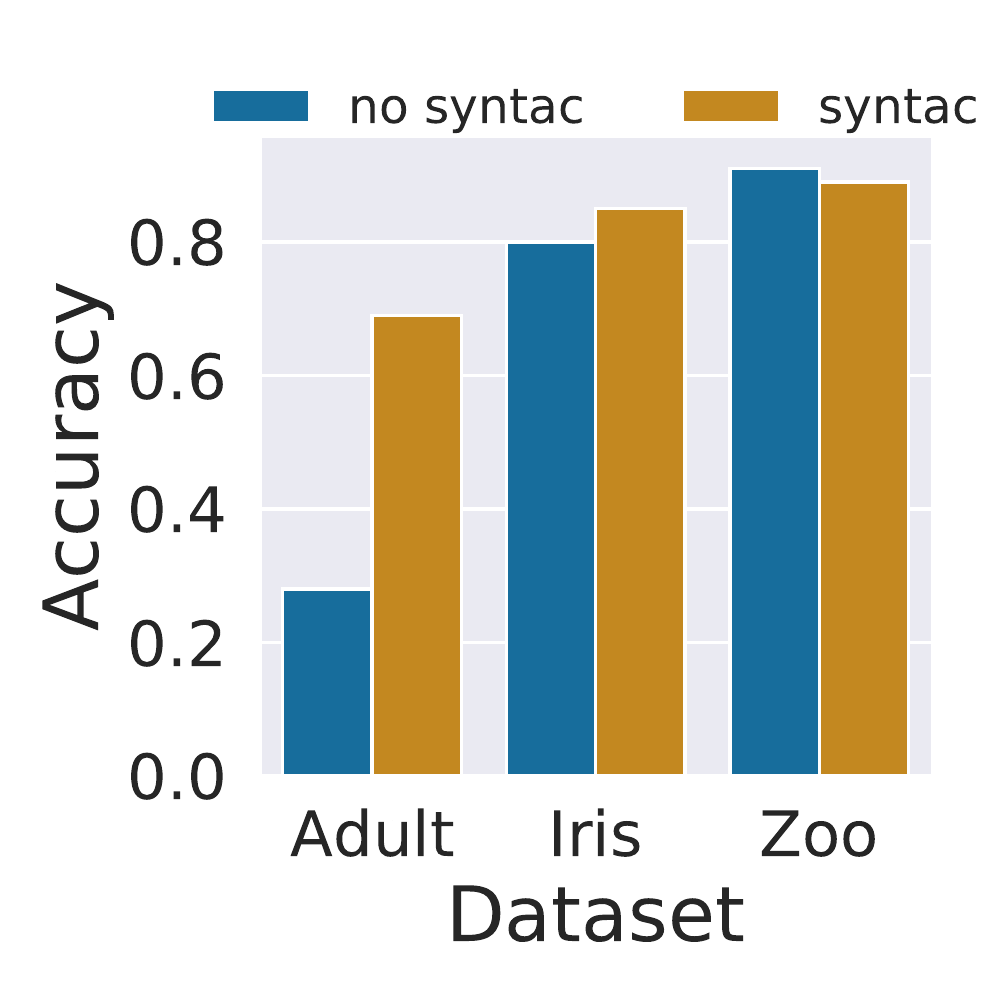}\hfill
	\includegraphics[scale=.4]{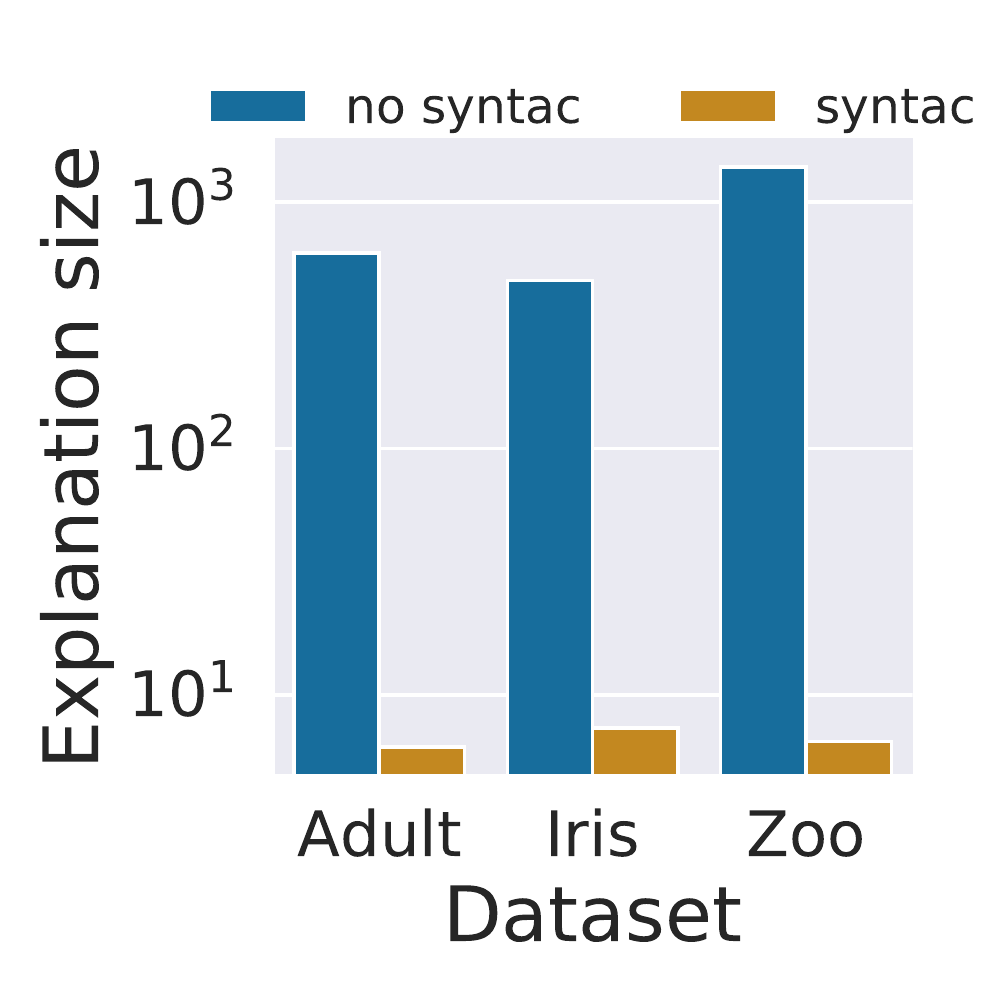}\hfill
	\includegraphics[scale=.4]{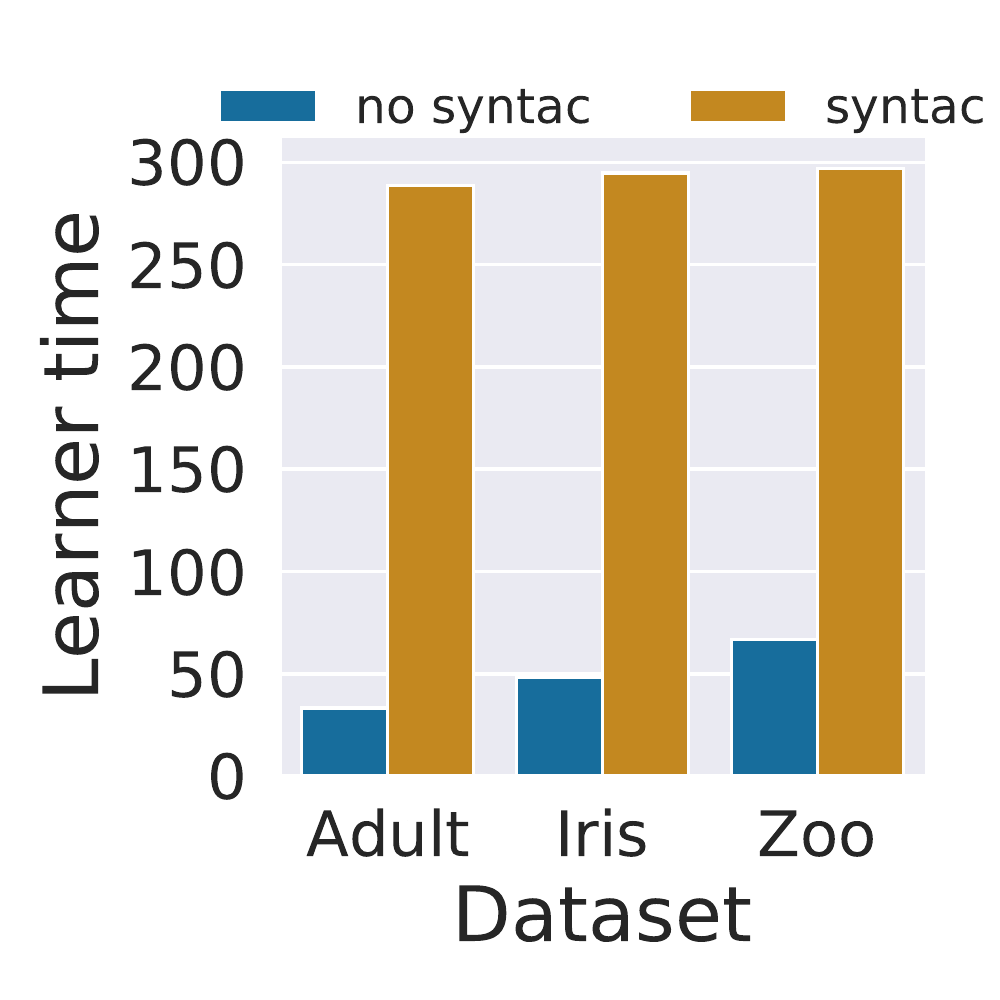}\hfill
	\includegraphics[scale=.4]{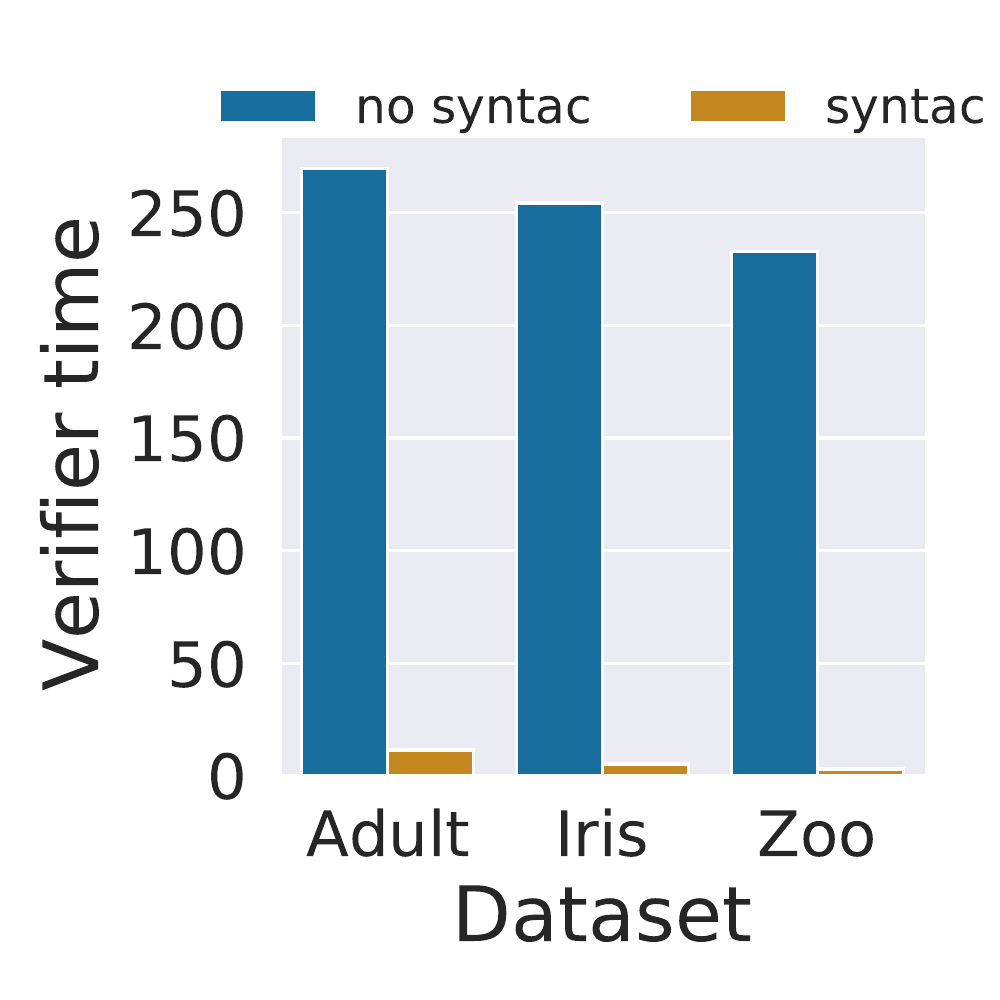}\hfill
	
	\caption{Effect of syntactic constraints (for learning DNF explanations) in SuGuS on the generated explanations. In all figures, `syntac' refers to including syntactic constraints in SyGuS. In the second plot, y-axis is in log scale.}
	\label{fig:syntactic_constraints}
\end{figure}

In Figure~\ref{fig:syntactic_constraints}, we present the accuracy and size of explanations along with execution time  in order to understand the effect of syntactic constraints in SyGuS for synthesizing explanations in DNF. In the leftmost plot, we observe that the accuracy of the explanation without any syntactic constraint is almost equal to applying syntactic constraints in the Zoo dataset but worse in both  Adult and Iris dataset. Thus the syntactic constraints for learning DNF not only restrict the format of the explanation, but also assist in generalization of the explanation.
Furthermore, major differences are observed in terms of size of the explanation and execution time. Particularly, the explanation are much larger in size (second plot) when no constraint is specified whereas the execution time of the learner is significantly less (third plot) because SyGuS  can quickly learn an explanation in the most general form when the syntactic constraints are not specified. Finally, we observe that although the execution time of the learner is much less without any syntactic constraint, the verifier needs to spend most of the allotted time (rightmost plot) because explanations in the most general form are too specific to the given set of counterexamples and cannot generalize well.

In summary, we observe that {\framework} can generate human interpretable succinct explanation of the neural network. The accuracy of explanations and the performance of {\framework} mostly depend on the syntactic constraints in SyGuS, where the choice of the constraints define the trade-off among accuracy, explanation size and execution time.


\section{Conclusion and Future Work}
\label{sec:conclusion}

In this paper, we have presented {\framework}, a novel explanation framework for generating probably approximately correct (PAC) explanations of the working of the black-box machine learning (ML) models. {\framework} is a model-agnostic and local explainer that is built on a novel integration of Syntax-Guided Synthesis (SyGuS) with PAC learning.  In the empirical studies, {\framework} generates small-size explanations  in first-order logic formulas that are highly interpretable.

For future work, we extend experimental evaluations to  state-of-the-art local explainers of the black-box ML models. Additionally, we explore different representations of explanations by exploiting the syntactic constraints in SyGuS.

\bibliographystyle{splncs04}
\bibliography{main}

\end{document}